\documentclass[runningheads]{llncs}
\usepackage[T1]{fontenc}

\usepackage{graphicx}
\usepackage{hyperref}
\usepackage{amsmath}
\usepackage{amssymb}
\usepackage{mathtools}
\usepackage{algorithm}
\usepackage{algorithmic}
\usepackage{microtype}
\usepackage{graphicx}
\usepackage{subcaption}
\usepackage{booktabs} 
\usepackage{arydshln}
\usepackage{enumitem}
\usepackage{subcaption} 
\usepackage{threeparttable} 
\usepackage{multirow}
\usepackage{tikz}

\usepackage{color}

\newcommand{\ours}{diffGHOST }

\begin{document}
\title{diffGHOST: Diffusion based Generative Hedged Oblivious Synthetic Trajectories}
\titlerunning{diffGHOST}
\author{Florent Guépin\inst{1}\textsuperscript{*}\orcidID{0009-0008-5098-0963} \and
Cheick Tidiani Cisse\inst{2}\textsuperscript{*}\orcidID{0009-0001-9984-8065} \and
Denis Renaud\inst{1}\textsuperscript{*}\orcidID{0000-0003-0381-4862} \and
François Bidet\inst{2}\orcidID{0000-0001-9813-0388} \and
Arnaud Legendre\inst{1}\orcidID{0009-0001-9512-2355}
}
\authorrunning{Guépin et al.}
\institute{Orange Research, Belfort, France \\
\email{\{florent.guepin, denisxavier.renaud, arnaud.legendre\}@orange.com} \and Orange Business, Belfort, France \\
\email{\{cheicktidiani.cisse, francois.bidet\}@orange.com}}
\maketitle
\def\thefootnote{*}\footnotetext[1]{These authors contributed equally to this work}

\begin{abstract}
Trajectories are nowadays valuable information for a wide range of applications. However they are also inherently sensitive, as they contain highly personal information about individuals. Facing this challenge, synthesizing mobility trajectories has emerged as a promising solution to leverage mobility information while preserving privacy. State-of-the-art models, often rely on the false assumptions of generative models implicit privacy and fails to provide privacy guarantees while preserving trajectories utility. Here, we introduce diffGHOST, a conditional diffusion model based on latent space segmentation, designed to answer this challenge. Thus, this paper propose a methodology that identify and mitigate memorization of critical samples using condition segments of a learn latent space.

\keywords{Synthetic data \and Privacy \and Conditional diffusion \and Memorization \and Mobility}
\end{abstract}
\section{Introduction}
Trajectory information about individuals is a valuable source of information, for a large variety of applications, ranging from research such as epidemiology studies~\cite{tizzoni2014use}, to urban planning~\cite{mahrez2021smart} or environmental studies~\cite{ma2020examining}. However, those trajectory information contain intrinsically private information, such as religious practices or political affiliation information~\cite{georgiadou2019location}. Thus, regulations such as the General Data Protection Regulation (GDPR) in Europe~\cite{eu-gdpr} have arise, and apply to human trajectories datasets, preventing stakeholders from valuable utilization of the underlying private information. Regulations introduced a notion of adversarial effort to produce information about a set of individuals, introducing a fundamental trade-off when publication such datasets: Between the quality of the valuable information referred to as the \textit{utility} and the personal data protection guarantees, referred to as the \textit{privacy}.

Following this rising interest, mathematical guarantees have quickly emerged to provide insurance of privacy. The formalism that quantifies the sufficient level of noise to add in order to achieve a certain tradeoff, is called Differential Privacy (DP), and has been first introduced by Dwork et al.~\cite{dwork2006calibrating}. However, in practice, this doesn't answer the fundamental trade-off privacy-utility as the level of noise one has to use to ensure privacy is too high to achieve satisfaction, especially for trajectories. 
A prolific literature exist to tighten the required magnitude of the noise~\cite{houssiau2022tapas}, with relaxed DP~\cite{dwork2006our}, but the micro-information, however crucial for some usages is destroyed~\cite{soria2017individual}.

To answer this issue, production of synthetic mobility data has emerged as a promising solution to preserve micro-information while providing access to anonymized information~\cite{zhu2023difftraj}. This is because the synthetic trajectories would be synthesized by a model rather than pertain to individuals, and therefore would not be attributable to anyone specifically. In practice however, synthetic data has also been proven vulnerable to privacy attacks~\cite{guepin2023synthetic,meeus2023achilles}. Even state of the art model, such as DiffTraj, based on diffusion model and introduced by~\cite{zhu2023difftraj}, falls vulnerable to privacy attacks as it relies solely on the generation by diffusion model~\cite{matsumoto2023membership}. 

\textit{Contributions.} In this paper, we introduce a novel architecture called \ours for Diffusion based Generative Hedged Oblivious Synthetic Trajectories. This model aims to produce synthetic datasets that provides a strong non-memorization guarantee. To do so, we introduce what is, to the best of our knowledge, the first conditional diffusion model based on latent space of VAE for trajectory generation. To build a strong privacy guarantee, we used a novel idea of segmentation of the distribution conditional to the latent space of a VAE, to later produce strong non-memorization guarantees at generation time.
\label{sec:introduction}

\section{Related Works}
\textbf{Trajectory generation.} Trajectory generation methods can be broadly divided into generative and non-generative approaches. Non-generative methods, such as perturbing real trajectories with random or Gaussian noise~\cite{zandbergen2014ensuring} or combining them~\cite{nergiz2008towards}, are the most straightforward. While simple and sometimes privacy-preserving by construction, these approaches tend to distort the original distribution, producing unrealistic traces with limited utility for real applications. Non-generative solutions thus often fail to address the utility-privacy trade-off, as privacy can only be improved at the cost of utility.\\
This limitation motivates the preference for generative approaches in the literature. These methods leverage deep neural networks to learn the underlying spatio-temporal distribution of real trajectories, from which new synthetic traces can be sampled. Liu et al.~\cite{liu2018trajgans} first introduced the idea of trajectory generation based on Generative Adversarial Networks (GANs)~\cite{goodfellow2020generative}. Since then, a wide range of architectures have been explored: GAN-based~\cite{rao2020lstm}, VAE-based~\cite{chen2021trajvae}, diffusion-based~\cite{zhu2023difftraj}, and transformer-based~\cite{hsu2024trajgpt} solutions. Each comes with its own trade-offs. Diffusion models produce high-quality and diverse samples but may struggle to capture long-range dependencies. Transformer-based models address this through the attention mechanism, yet may lack diversity and typically require map-grid tokenization. Notably, \cite{zhu2023difftraj} proposed conditioning generation on explicit features such as start/end points, duration, or average speed, enabling fine-grained control. In this work, we instead condition our model on discrete latent representations, a compact conditioning that implicitly captures multiple features and groups trajectories according to their spatio-temporal structure.

\textbf{Privacy Guarantees in trajectory generation.}
Discussion over the privacy guarantees in trajectory generation often falls into two distinct categories. The first one grounds the privacy analysis into the inherent property of model generation, using argument based on the anonymity \textit{by design} offered by generative solutions, as described in multiple survey~\cite{miranda2023sok,buchholz2024sok}. Following the argument made by(\cite{cherigui2026dualperspectivesynthetictrajectory} and~\cite{abbar2025trajdd}), we argue these privacy properties need to be formally evaluated and quantified. The other category grounds the privacy analysis on the formal guarantees provided by the addition of differential privacy into their models~\cite{zhang2023lgan}. First, Buchholz et al.~\cite{buchholz2024sok} unveiled that these formal guarantees suffers from problems in their proofs, leading to a false sense of privacy. Second, these solutions come at a great cost in utility as they add noise to the global distribution. By the capacity of our model to segment the original distribution, we unlock the capacity to provide, when needed, local K-anonymity, to tailor the noise addition to each segment's memorization's sensitivity and save the overall utility of the model.

K-anonymity in tabular datasets has been introduced first by Sweeney et al.~\cite{sweeney2002k}. K-anonymity ensures that for every  selection of attributes, at least K persons are concerned by this selection. This notion has been carefully adapted to trajectory information by Gruteser et al.~\cite{gruteser2003anonymous}. It translates that for each couple of spatiotemporal points, there is at least K individuals can be described by these spatiotemporal selection. In our solution, we locally blur the spatiotemporal information of individuals evaluated memorized by the model, to achieve formal local K-anonymity, removing the memorization threat these segments had. 
\label{sec:relatedwork}

\section{Background}
\subsection{Trajectories}
A trajectory, as the capture of individual's mobility, is typically defined as a chronologically ordered sequence of event. Formally, with $U$ the set of all individual in the considered dataset, we can define a trajectory as follows. An individual $u \in U$ has a trajectory $T_u$ defined as $T_u = [(x_{t_1},y_{t1}),\cdots,(x_{t_n},y_{t_n})]$. Where each $(x_i,y_i)$ defined the geographic position of $u$ in a given reference system (e.g. $(latitude, longitude)$) at time $t_i$; with $t_i$ being explicitly or implicitly defined. The set of all trajectories is called $T$.  Often, trajectory datasets can be enriched with categorical information, such as global information about the owner's trace, or semantic details about visited points (Points of Interest). In such case, the trace $T_u$ can be seen as $T_u = [[(x_{t_1},y_{t_1}), c_{t_1}^1, \cdots,c_{t_1}^m],\cdots,[(x_{t_n},y_{t_n}), c_{t_n}^1, \cdots, c_{t_n}^m]]$ where each $c^i_j$ are one categorical information. Thus, a user's trace is defined as the collection of all their trajectories. In this paper, we will refer to the trajectory dataset as $D$, the dataset containing both all users information $U$ and all the trajectories $T$.

\subsection{VAEs and latent space}
\label{subsec:VAEandlatent}
Variational autoencoders (VAEs)~\cite{kingma2013auto} are types of generative models that define $p_{\theta}(\mathbf{x}\mid \mathbf{z})$ with prior $p(\mathbf{z})$ (often $\mathcal{N}(\mathbf{0}, \mathbf{I})$) and a parametric model (encoder) $q_{\phi}(\mathbf{z}\mid \mathbf{x})$ also called inference model.
The goal is to optimize the variational parameters (the encoder) $\phi$, so that the true posterior $p_{\theta}(\mathbf{x}\mid \mathbf{z})$, mathematically intractable, is equivalent to the approximated posterior $q_{\phi}(\mathbf{z}\mid \mathbf{x})$. Thus, the optimization objective of VAEs is the evidence lower bound (ELBO): $\mathcal{L}_{\text{ELBO}}(\theta,\phi; \mathbf{x}) =\, \mathbb{E}_{q_{\phi}(\mathbf{z}\mid \mathbf{x})} \big[\log p_{\theta}(\mathbf{x}\mid \mathbf{z})\big] - \mathrm{KL}\!\big(q_{\phi}(\mathbf{z}\mid \mathbf{x}) \,\|\, p(\mathbf{z})\big).$
In practice, the encoder has 2 outputs: per-dimension mean $\boldsymbol{\mu}_{\phi}(\mathbf{x})$ and (log)variance $\log \boldsymbol{\sigma}^2_{\phi}(\mathbf{x})$, used during sampling through the reparameterization trick: $\mathbf{z} \;=\; \boldsymbol{\mu}_{\phi}(\mathbf{x}) \;+\; \boldsymbol{\sigma}_{\phi}(\mathbf{x}) \odot \boldsymbol{\epsilon}$ where $\boldsymbol{\epsilon} \sim \mathcal{N}(\mathbf{0}, \mathbf{I})$. 

\subsection{K-anonymity}

First introduced by Samarati et al.~\cite{samarati1998protecting} and significantly amended in Sweeney et al.~\cite{sweeney2002k}, it defined a key notion for data privacy. For tabular dataset, the idea is that for each tabular value $c_i$, $c_i$ must be present at least $k$ time any queries on the database to protect it's privacy.  In the context of trajectory generation, the previous definition has been carefully adapted by Abul et al.~\cite{abul2008never}. The adapted version of the definition introduced a new parameter $\delta$ such that we can define the $(k,\delta)$-anonymity. $(k, \delta)$-anonymity ensure that at any time, there is at least $k$ traces in the spherical volume of radius $\delta$ for any subset $S \subseteq D$ and center $c \in S$.

\subsection{Diffusion models}
Diffusion models are likelihood-based generative models that learn data distribution by defining a noising (forward) process $\mathbf{q(x_t\mid x_{t-1})}$, a Markov chain, that gradually perturbs data into a simple distribution (e.g., Gaussian distribution) and learns a parametric model $p_{\theta} (x_{t-1} \mid x_t)$ that reverses the forward process~\cite{ho2020denoising,nichol2021improved}.

\begin{figure*}
    \centering
    \includegraphics[width=12.0cm]{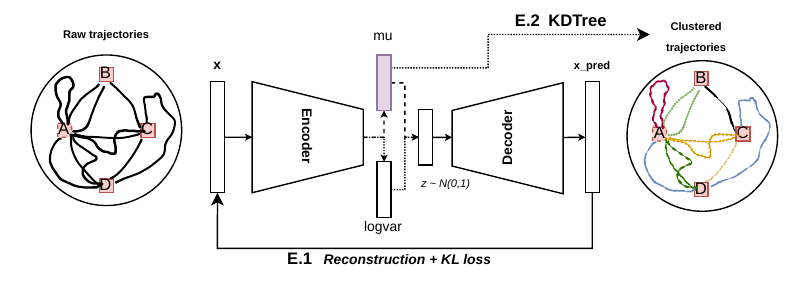}
    \caption{Illustration of trajectory projection via VAE and KDTree}
    \label{fig:bigpicture}
\end{figure*}

\paragraph{\textbf{Forward (or diffusion) process:}} Consists of $T$ steps that add Gaussian noise with a variance schedule $\{\beta_t\}_{t=1}^T$. Let $\alpha_t \!=\! 1-\beta_t$ and $\bar\alpha_t \!=\! \prod_{s=1}^t \alpha_s$. The forward chain factorizes as $q(\mathbf{x}_{1:T}\mid \mathbf{x}_0) = \prod_{t=1}^T q(\mathbf{x}_t \mid \mathbf{x}_{t-1})$, and $q(\mathbf{x}_t \mid \mathbf{x}_{t-1}) = \mathcal{N}\!\big(\sqrt{\alpha_t}\,\mathbf{x}_{t-1},\, \beta_t \mathbf{I}\big)$.
    
An important property of the forward process both theoretical and practical, is that it allows sampling $\mathbf{x_t}$ at any timestep $\mathbf{t}$ from $\mathbf{x_0}$. This close form, also called reparameterization trick can be defined as follows: $q(\mathbf{x}_t \mid \mathbf{x}_0) = \mathcal{N}\!\big(\sqrt{\bar\alpha_t}\,\mathbf{x}_0,\,(1-\bar\alpha_t)\mathbf{I}\big)$, and $\mathbf{x}_t = \sqrt{\bar\alpha_t}\,\mathbf{x}_0 + \sqrt{1-\bar\alpha_t}\,\boldsymbol{\epsilon},\;\; \boldsymbol{\epsilon}\!\sim\!\mathcal{N}(\mathbf{0},\mathbf{I})$. The time-steps $\mathbf{T}$ as well as the noise variance scheduler (e.g. linear, cosine etc) are considered as parameters to tune with respect to the task.
\paragraph{\textbf{Reverse process:}} Aims to recover $\mathbf{x}_0$ by iteratively denoising from  $\mathbf{x}_T \!\sim\! \mathcal{N}(\mathbf{0}, \mathbf{I})$ down to $\mathbf{x}_0$.  
At each step, a neural network predicts a denoising target---either the noise  
$\boldsymbol{\epsilon}$, the clean sample $\mathbf{x}_0$, or the $v$-parameterization---and  
defines a Gaussian transition $
p_{\theta}(\mathbf{x}_{t-1}\mid \mathbf{x}_t)
= \mathcal{N}\!\big(\boldsymbol{\mu}_{\theta}(\mathbf{x}_t, t),\, \sigma_t^2 \mathbf{I}\big)$.

In practice, the mean-squared error is optimized on the chosen target over  
sampled timesteps (e.g uniformly, based on SNR etc). At sampling time, the update is $    \mathbf{x}_{t-1} = \frac{1}{\sqrt{\alpha_t}} \!\left(\mathbf{x}_t - \frac{1-\alpha_t}{\sqrt{1-\bar{\alpha}_t}} \,\boldsymbol{\epsilon}_{\theta}(\mathbf{x}_t,t) \right) + \sigma_t\,\mathbf{z}$, and $\mathbf{z}\!\sim\!\mathcal{N}(\mathbf{0},\mathbf{I})$ (if $t>1$). Setting $\sigma_t = 0$ yields deterministic DDIM sampling \cite{song2020denoising}, while using a nonzero $\sigma_t$ recovers stochastic DDPM sampling.
\label{sec:background}

\section{Model Framework}
Our methodology for synthesizing trajectories with guarantees of non-memorization follows 5 distinct steps, namely E$_1$, $\cdots$, E$_4$. First, E$_1$ presents the training step of our VAE, which consists in learning latent representation of trajectories. Second, E$_2$ creates the clusterisation of the trajectories by using a KDtree algorithm over the trajectories latent representation created in E$_1$. Third, E$_3$ trains a conditional diffusion model, where the trajectories used to train the model are the same trajectories from the training set used in E$_1$ and the condition vectors come from E$_2$ outputs (discret latent representation). Finally, E$_4$ establishes a non-memorization guarantee over the traces synthesized at the end of step E$_3$. 

\begin{figure*}
    \centering
    \includegraphics[width=0.75\linewidth]{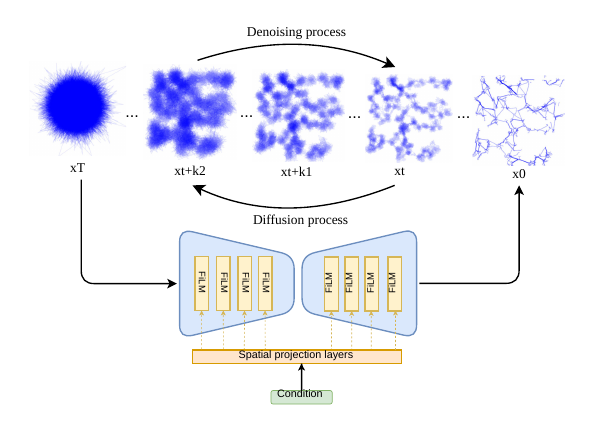}
    \caption{Illustration of trajectory diffusion model architecture E$_3$}
    \label{fig:diffusion_big_picture}
\end{figure*}

\subsection{E$_1$: VAE training}
\label{subsec:vaetraining}

Let $D_{\text{train}}$ and $D_{\text{test}}$ denote the training and testing sets, with $K_{\text{train}}$ and $K_{\text{test}}$ trajectory samples $T_u$, respectively. Each $T_u$ is a trajectory of fixed length $L=256$. The goal, as in the ELBO equation (Sec.~\ref{subsec:VAEandlatent}), is to learn a latent representation that reconstructs $T_u$ accurately while keeping the approximate posterior $q_{\phi}(\mathbf{z}\mid T_u)$ close to a standard normal distribution. To achieve this goal, we use a 1D CNN encoder–decoder architecture that takes in a normalized trajectory ($[-1,1]$) and  outputs $\boldsymbol{\mu}(T_u)$ and $\boldsymbol{\sigma}(T_u)$. The latent $\mathbf{z(T_u)}$ is then sampled from ($\boldsymbol{\mu}(T_u)$, $\boldsymbol{\sigma}(T_u)$) as in the parameter trick equation (Sec.~\ref{subsec:VAEandlatent}) and used by the decoder to reconstruct $\widehat{T}_u$. 

Thus, the final loss of the VAE is a weighted sum of two terms: $\mathcal{L}_{\text{VAE}}(T_u) = \lambda_{\text{rec}}\,\|\,T_u - \widehat{T}_u\,\|_2^2 + \beta\,\mathrm{KL}\!\big(q_{\phi}(\mathbf{z}\mid T_u)\,\|\,\mathcal{N}(\mathbf{0},\mathbf{I})\big)$ where the MSE term measures reconstruction quality (maximizing $p_\theta(x\mid z)$), while the KL term minimizes the divergence between the approximate posterior $q_\phi(z\mid x)$ and the prior $p(z)$. We set $\lambda_{\text{rec}}= 1.0$ and $\beta = 0.1$ empirically.

\subsection{E$_2$: Segmentation of the latent space.} 
\label{subsec:segmentation}
Once the training of the VAE is done, each trajectory $T_u \in D_{train}$ is mapped to its $\boldsymbol{\mu}(T_u)$ using the trained encoder. Thus, the mean of trajectories $\mu(T_u)$ are used as the latent representations. After projecting all the training points into the latent space of dimension $n$, the latent space segmentation is performed using a KDTree. In the rest of the paper, for each trajectory $T_u \in D_{train}$, we will call $C_u$ the hyper-rectangle in which $z_{T_u}$ is contained. 

\subsection{E$_3$: Diffusion training}
\label{subsec:diffusiontraining}

The segmentation of the VAE latent space is followed by the training of our conditional diffusion model to generate synthetic trajectories having similar topological properties given a condition vector $C_u$. Thus, each trajectory $T_u$ is paired with a condition vector $C_u$  that encodes the topological properties of its corresponding segment. Figure \ref{fig:diffusion_big_picture} illustrates our diffusion model architecture.

\textbf{Architecture.} As in many diffusion models, we employ a U-Net-based architecture with self-attention \cite{vaswani2017attention} specifically designed for 1D sequential data. The architecture is composed mainly of CNNs~\cite{krizhevsky2012imagenet} and MLP for embeddings injection. The network consists of four encoder blocks and four decoder blocks, each containing residual connections~\cite{he2016deep}. Skip connections bridge corresponding encoder-decoder layers to preserve fine-grained spatial information across scales. Each residual block comprises two convolutional layers with kernel size 3, followed by normalization~\cite{ioffe2015batch} and activation functions.

\textbf{Temporal and conditional embeddings.}
Diffusion timesteps $t \in [0, T]$ is encoded using sinusoidal positional embeddings~\cite{vaswani2017attention}, which are then projected through MLP layers to produce  embeddings for different encoder/decoder levels. On the other side, the condition vector $C_u$, is embedded and upsampled to match encoder/decoder spatial dimension trough the spatial projection layers as illustrated in figure \ref{fig:diffusion_big_picture}. These embeddings are then injected into the U-Net model via the FiLM (Feature-wise Linear Modulation) \cite{perez2018film} mechanism across multiple levels of the encoder and decoder, giving full freedom to the model on how and where to efficiently use the embeddings while preserving other important features.
Furthermore, we integrated Classifier-Free Guidance (CFG)~\cite{ho2022classifier} to have full control on the conditional generation process. CFG allows to control the diversity of generated samples, giving a  way to operates on privacy concerns. Thus, the noise prediction during the diffusion process is written: $\epsilon_\theta(T_u^t, t, C_u) = \epsilon_\theta(T_u^t, t, C_0) + s \times ( \epsilon_\theta(T_u^t, t, C_0) - \epsilon_\theta(T_u^t, t, C_u))$  where $C_0$ corresponds to the null condition vector (all zero), essentially for unconditional generation. In the case where the guidance scale $s = 0$, the model can be viewed as an unconditional diffusion model. For $s \geq 1$, the model is a conditional diffusion model. The higher the guidance scale, the more the model constraints to the input condition. However, increasing the guidance scale reduces diversity. For more diversity, guidance scale should stick between 0 and 1. For our privacy methodology, as it will be presented in section~\ref{subsection:privacy_framework}, we need to ensure that the model respects the condition input, requirement enforced with CFG.
 
\textbf{Training objective.} Following the DDPM framework~\cite{ho2020denoising}, we train our model to predict the noise $\boldsymbol{\epsilon} \sim \mathcal{N}(\mathbf{0}, \mathbf{I})$ added during the forward diffusion process. At each training step, we sample a random timestep $t \sim \mathcal{U}(1, T)$ and a noise vector $\epsilon$ then compute the noised trajectory $T_u^t = \sqrt{\bar\alpha_t}\,\mathbf{T}_u + \sqrt{1-\bar\alpha_t}\,\boldsymbol{\epsilon}$ where $\bar\alpha_t$ are the cumulative products of the noise schedule values. The denoising network $\boldsymbol{\epsilon}_{\theta}(T_u^t,t,C_u)$ is trained to minimize the simplified variational lower bound: $\mathcal{L}_{loss} = \mathbb{E}_{T_u, \epsilon, t, C_u}\left[\|\epsilon - \epsilon_\theta(T_u^t, t, C_u)\|^2_2\right]$, where the expectation is taken over the training dataset batches, random noise samples, uniformly sampled timesteps, and corresponding condition vectors.
By conditioning on $C_u$ throughout the denoising process  the model learns to generate trajectories that navigate the latent space in accordance with the underlying tessellation topology, ensuring geometric consistency and control over generation.

\subsection{E$_4$: Privacy guarantees}
\label{subsection:privacy_framework}

Our goal is to provide privacy guarantees using the latent space segmentation to condition our diffusion model. To achieve this, we compute, for each segment, the sensitivity of the condition based on the Fréchet distance between each synthetic and real traces mapped or conditioned by the same segment. We first extract the set of real records projected in the given segment $C_u$, more formally, we compute $K_{\mid_{C_u}} = \{ T_u \mid z_{T_u} \in C_u\}$. Following established methodology in trajectory privacy literature~\cite{bonnaire2025diffusion,carlini2023extracting,favero2025bigger}, we assess memorization vulnerability using nearest-neighbor analysis: For each synthetic record $S_{C_u}^i$, we compute the Fréchet distance with every real record $T_u \in K_{\mid_{C_u}}$. We then sort the results, and finally apply a threshold $k$ to the quotient between the distance to the first and the second closest neighbor ($T_u^1$ and $T_u^2$). Formally, we compute: $\frac{||S_{C_u}^i - T_u^1||}{||S_{C_u}^i - T_u^2||} < k$. A synthetic trajectory is called \textit{memorized} if the distance ratio verify the preceding equation, indicating the synthetic sample is disproportionately similar to a single training example. Limitations and interpretations of this metric are discussed in Section~\ref{sec:discussion}. Based on this evaluation, we are able to identify how many synthetic samples $S_{C_u}^i$ from the condition $C_u$ are memorized by the model. Based on this number, we can define if the condition $C_u$ is deemed vulnerable. If this is the case, we apply Laplacian noise to each synthesized trajectory, based on a bound $\delta$. The goal of this noise addition is to ensure an indistinguishability within the synthesized trajectory generated using the condition $C_u$, while preserving the utility of the other neighbors condition. The idea behind our noise addition is to constraint the resulting trajectories to a ball of radius $\delta$ and center $T_c$, such that it does not affect the overall utility, while providing guarantees of $K_{\mid_{C_u}}$-anonymity for this specific condition. More formally, the noise addition is made as follows: Let $S_{C_u}^c$ be the barycenter of the generated trajectories for a given condition $C_u$. We compute the maximum distance between the barycenter and the border of the ball by computing $\Delta S_{max} = |\delta- \sqrt{(S_{C_u}^x-S_{C_u}^{x_c})^2 + (S_{C_u}^y-S_{C_u}^{y_c})^2}|$. Then, we sample a Laplacian noise that almost always keep the new point within the ball of radius $\delta$ by fixing $b=\frac{\Delta S_{max}}{100}$ and $S_{C_u}^{new} = (L(S_{C_u}^x, b),L(S_{C_u}^y, b))$. We then ensure that $S_{C_u}^{new}$ stays within the boundaries of the ball, by computing $\Delta_{new} = \delta- \sqrt{(S_{C_u}^{x_{new}}-S_{C_u}^{x_c})^2 + (S_{C_u}^{y_{new}}-S_{C_u}^{y_c})^2}$ then if $\Delta_{new} < 0$, it means the noise addition get $S_{C_u}^{new}$ out of the ball of radius $\delta$, so we keep the edge of the ball as the new point by fixing $S_{C_u}^{new'} = S_{C_u}^c + \delta*\frac{S_{C_u}^{new} - S_{C_u}^c}{\sqrt{(S_{C_u}^{x_{new}}-S_{C_u}^{x_c})^2 + (S_{C_u}^{y_{new}}-S_{C_u}^{y_c})^2}}$. This noise addition ensures we keep the new point $S_{C_u}^{new}$ inside a ball of radius $\delta$ and center $S_{C_u}^c$, which ultimately leads to an indistinguishability between the generated traces, removing the memorization. If this is not the case, we can leave the resulting synthetic trajectory without additional modifications. Application of this procedure ensures that for each segment, we either detect memorization, which we will later remove by ensuring a $K_{\mid_{C_u}}$-anonymity within the condition, or we don't and we can use the generated trace directly without fear of memorization. 
\label{sec:framework}

\section{Experimental Evaluation}
This section presents a comprehensive evaluation of \ours from both utility and privacy perspectives. We first describe our experimental setup and datasets, then assess model's generation quality through quantitative metrics and visual inspection, and finally evaluate privacy guarantees through memorization analysis.

\subsection{Experimental Setup}
\paragraph{Datasets.} We evaluate our approach on two complementary datasets that provide controlled and realistic evaluation scenarios:

\begin{enumerate}[label=\Alph*.]
    \item \textit{Procedural dataset.} To validate our method in a controlled environment with clean data, we generated synthetic trajectories using a rule-based simulator. The simulated world contains multiple towns with two distinct mobility behaviors: (i) \textit{daily workers} (99.9\% of population) exhibit routine patterns, traveling only between home and workplace with at most two intermediate towns; (ii) \textit{travelers} (0.1\%) represent outliers with continuous movement patterns, their start an end points are systematically different. The dataset comprises 999,000 daily workers and 1,000 travelers, providing a realistic class imbalance scenario. 
    \item \textbf{GeoLife}~\cite{zheng2008understanding,zheng2009mining,zheng2010geolife}. This real-world dataset contains GPS trajectories from 182 users collected over 5 years, totaling 17,621 raw trajectories. We applied the following preprocessing: (i) spatial filtering to Beijing city boundaries where most data originates; (ii) temporal segmentation into 30-minute windows with uniform timestamps; (iii) padding to exactly 256 points per trajectory; (iv) geographic bounding to inner-city regions. This yielded 45,998 preprocessed trajectories suitable for model training. 
\end{enumerate}

\begin{table}[htb]
    \centering
    \small
    \setlength{\tabcolsep}{2pt}
    \caption{Training hyperparameters for VAE and diffusion models. Values identical across datasets unless specified.}
    \label{tab:diff_params}
    \begin{tabular}{@{}llr@{\hspace{1em}}llr@{}}
        \toprule
        \multicolumn{3}{c}{\textbf{Model Architecture}} & \multicolumn{3}{c}{\textbf{Training Configuration}} \\
        \midrule
        VAE parameters & & 3M & Optimizer & & Adam \\
        Diffusion parameters & & 11M & Learning rate (VAE) & & $10^{-3}$ \\
        Latent dimension & & 4 & Learning rate (Diff.) & & $2\times10^{-4}$ \\
        U-Net channels & & [64,128,256,512] & Batch (Proc. / Geo.) & & 256,128 / 128,64 \\
        Conditions (clusters) & & 65k & Epochs (VAE / Diff.) & & 100,200 / 300,600 \\
        & & & KL weight ($\beta$) & & 0.01 \\
        & & & CFG dropout & & 0.1 \\
        \midrule
        \multicolumn{3}{c}{\textbf{Diffusion Process}} & \multicolumn{3}{c}{\textbf{Data \& Inference}} \\
        \midrule
        Total timesteps & & 500 & Normalization & & [-1, 1] \\
        Noise schedule & & Cosine & Trajectory length & & 256 points \\
        $\beta$ range & & [0.0001, 0.02] & Samples (Proc. / Geo.) & & 999K / 46K \\
        Prediction target & & $\epsilon$ & Sampling method & & DDPM \\
        EMA decay & & 0.9999 & Sampling steps & & 500 \\
        & & & CFG scales & & [1.0, 2.5] \\
        \bottomrule
    \end{tabular}
\end{table}

\paragraph{Baseline methods.} We compare \ours against three baselines representing different generative paradigms:

\begin{enumerate}[label=(\arabic*), leftmargin=*, nosep, itemsep=2pt] 
    \item \textbf{DiffTraj}~\cite{zhu2023difftraj}: State-of-the-art conditional diffusion model for trajectory generation, conditioned on eight attributes (departure time, trip distance, trip duration, trip length, average distance and speed, start/end grid cells).
    
    \item \textbf{VAE}~\cite{kingma2013auto}: Standard variational autoencoder baseline representing classical generative modeling approaches. Provides a lower bound on achievable performance for latent variable models.
    
    \item \textbf{Gaussian noise}: Non-generative baseline that perturbs original trajectories with additive Gaussian noise ($\mu=0, \sigma=0.01$), following~\cite{zhu2023difftraj}. Represents traditional privacy-preserving perturbation methods rather than true synthesis.
\end{enumerate}

\paragraph{Implementation details.} All models have been trained on two NVIDIA GPUs (L40 24GB and RTX PRO 6000). More details on training hyperparameters are detailed in Table~\ref{tab:diff_params}. 

\subsection{Utility Evaluation}
We assess generation quality through both qualitative visual inspection and quantitative metrics spanning multiple scales of trajectory analysis.

\subsubsection{Qualitative Assessment}

Figure~\ref{fig:model_comparison} presents visual comparison of 3,000 generated trajectories from \ours and DiffTraj against real data.

On the procedural dataset (bottom row), \ours demonstrates superior fidelity in capturing the underlying mobility patterns, with generated trajectories closely following the town-to-town connectivity structure. DiffTraj exhibits higher spatial variance and occasional deviations from the road network. Both methods successfully preserve the overall spatial distribution, though \ours shows tighter adherence to infrastructure constraints.

On GeoLife (top row), both methods face greater challenges due to data sparsity and GPS noise inherent in real-world collection. \ours maintains better concentration along major transportation corridors, while DiffTraj shows more dispersed/coarse patterns. The performance gap between datasets reflects both the quantity advantage (20× more procedural data) and the cleaner distribution of synthetic data compared to noisy GPS measurements.

\begin{figure}[!htb] 
    \centering
    \begin{tabular}{ccccc}
            \begin{subfigure}[b]{0.192\linewidth}
                \centering
                \includegraphics[width=\linewidth, height=1.003\linewidth]{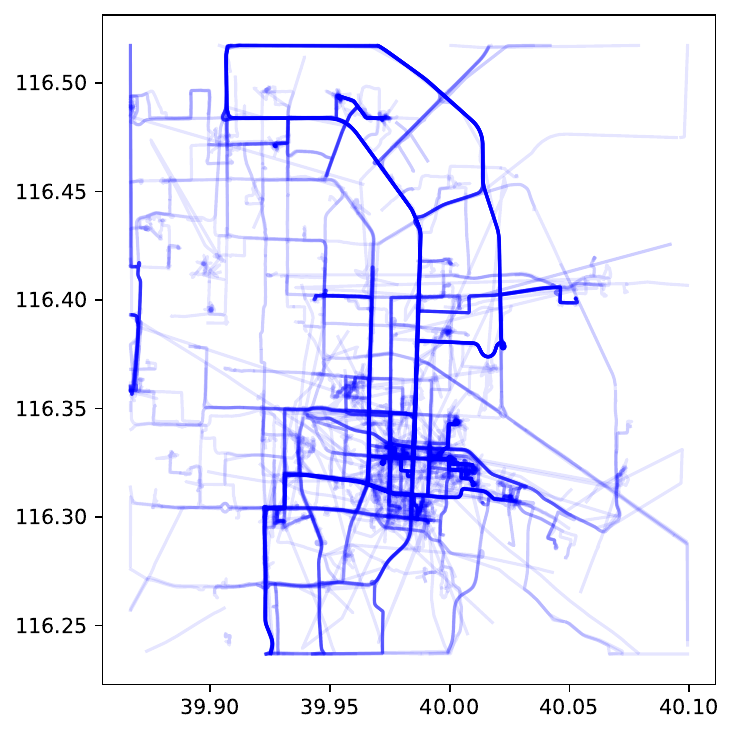}
            \end{subfigure}
            \begin{subfigure}[b]{0.192\linewidth}
                \centering
                \includegraphics[width=\linewidth, height=1.003\linewidth]{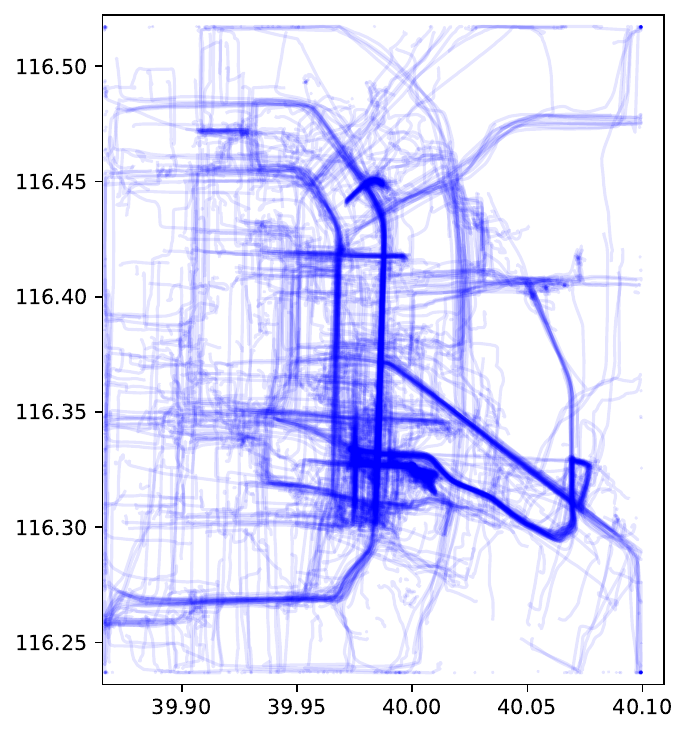}
            \end{subfigure}
            \begin{subfigure}[b]{0.192\linewidth}
                \centering
                \includegraphics[width=\linewidth]{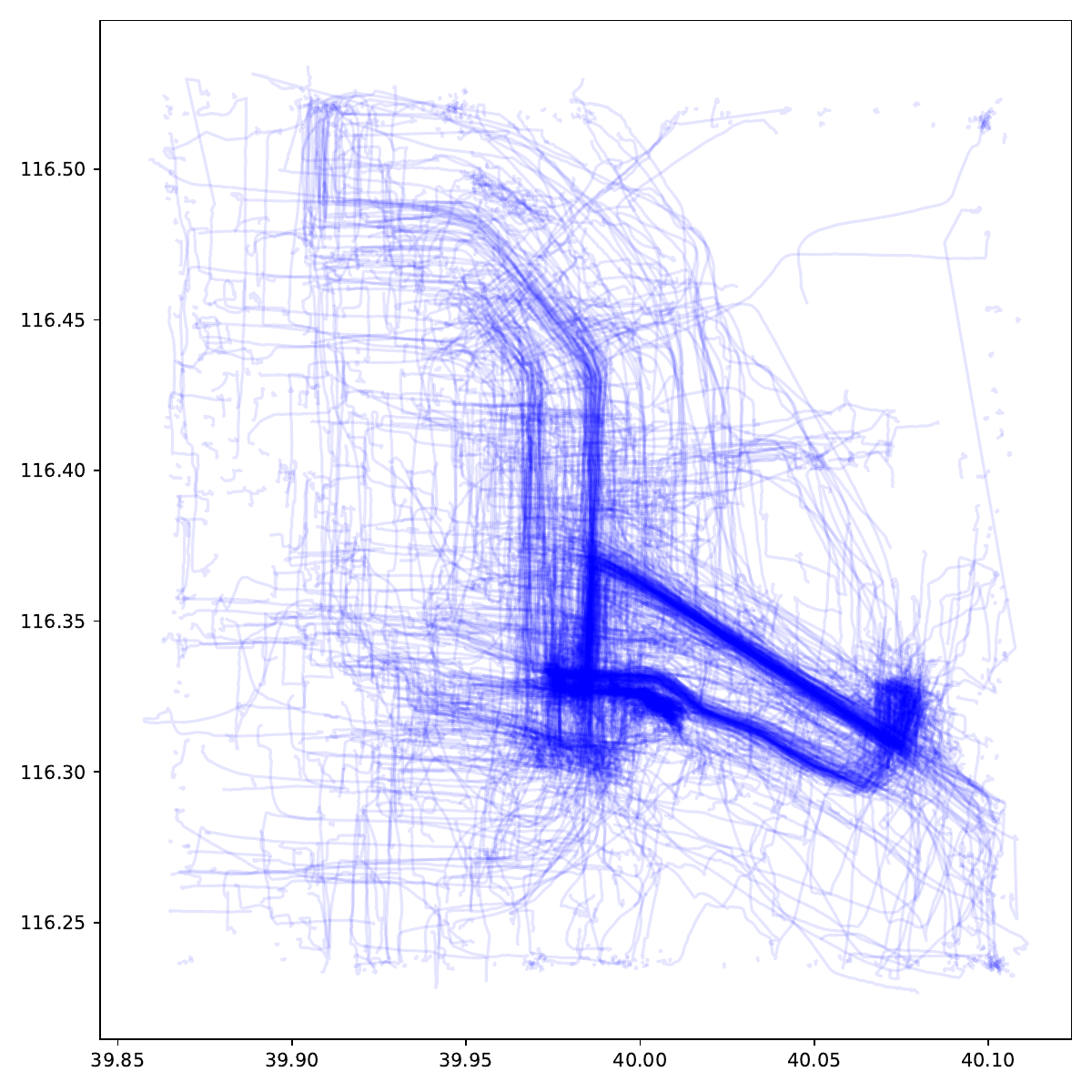}
            \end{subfigure}
            \begin{subfigure}[b]{0.192\linewidth}
                \centering
                \includegraphics[width=\linewidth, height=1.003\linewidth]{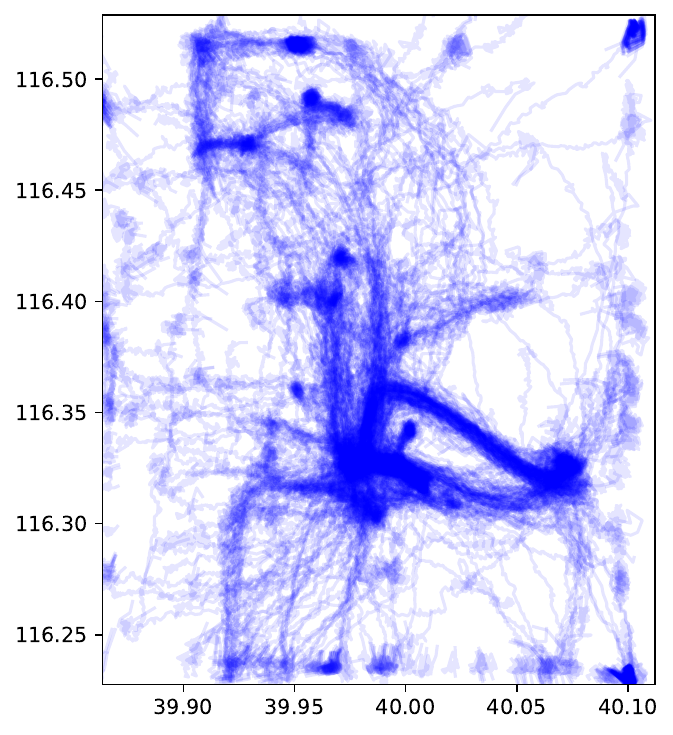}
            \end{subfigure}
            \begin{subfigure}[b]{0.192\linewidth}
                \centering
                \includegraphics[width=\linewidth, height=1.003\linewidth]{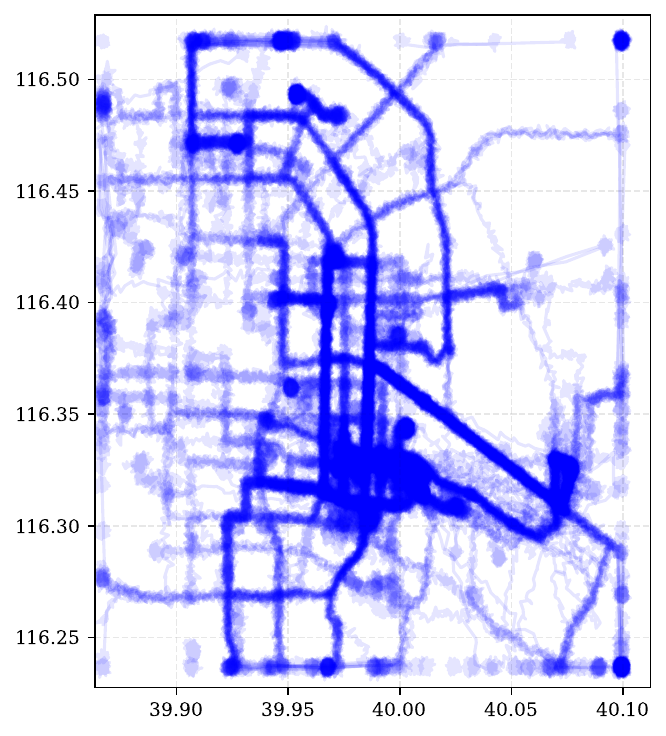}
            \end{subfigure}
            \\
        \begin{subfigure}[b]{0.192\linewidth}
            \centering
            \includegraphics[width=\linewidth]{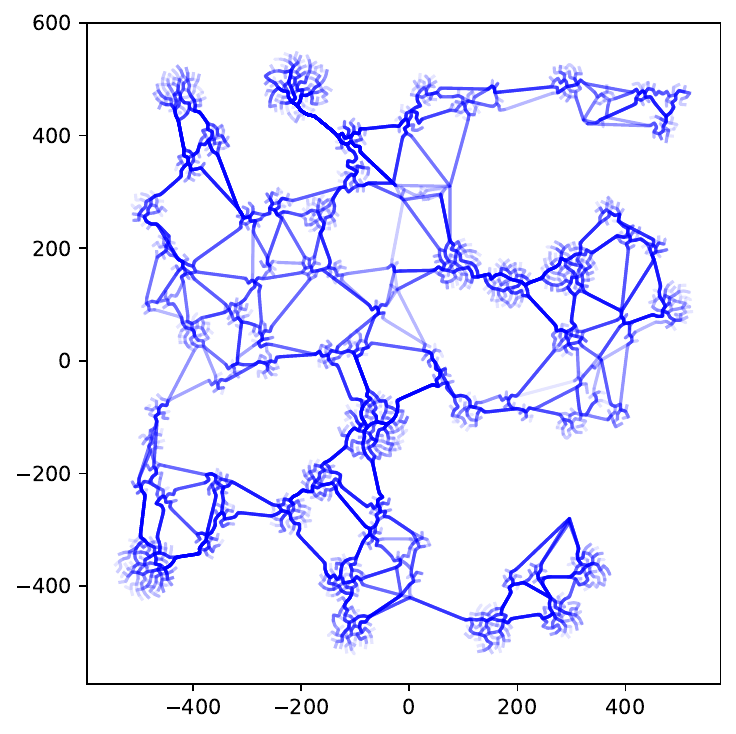}
            \caption{Real}
        \end{subfigure}
        \begin{subfigure}[b]{0.192\linewidth}
            \centering
            \includegraphics[width=\linewidth, height=1.003\linewidth]{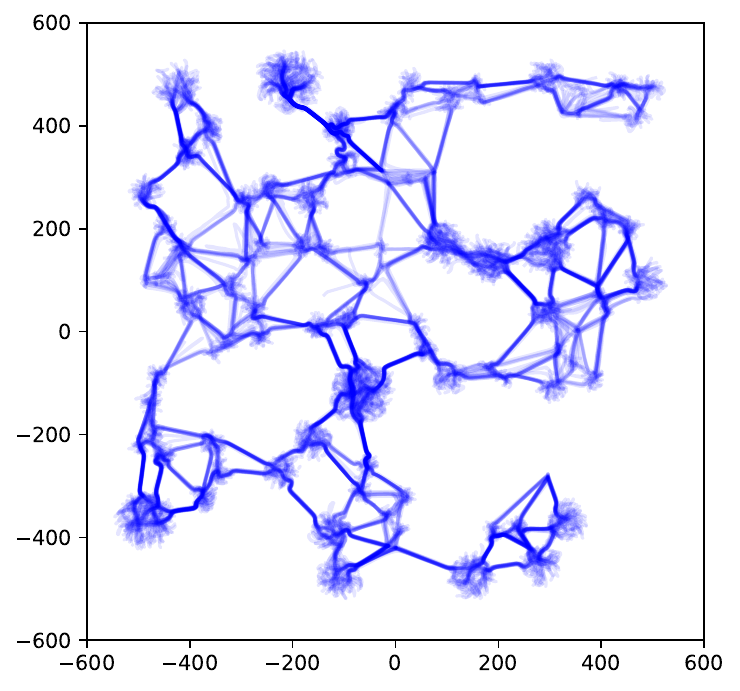}
            \caption{\ours}
        \end{subfigure}
        \begin{subfigure}[b]{0.192\linewidth}
            \centering
            \includegraphics[width=\linewidth]{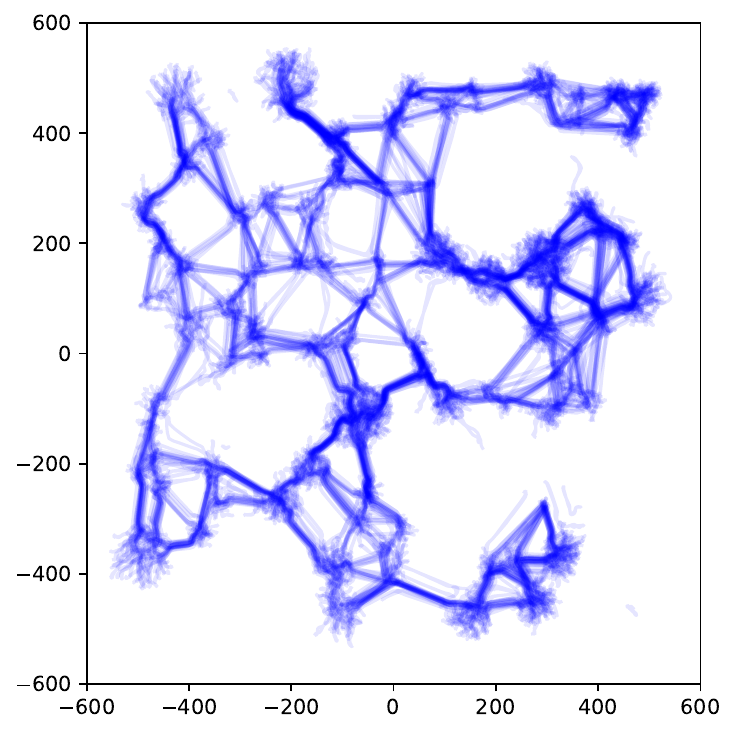}
            \caption{DiffTraj}
        \end{subfigure}
        \begin{subfigure}[b]{0.192\linewidth}
            \centering
            \includegraphics[width=\linewidth, height=1.003\linewidth]{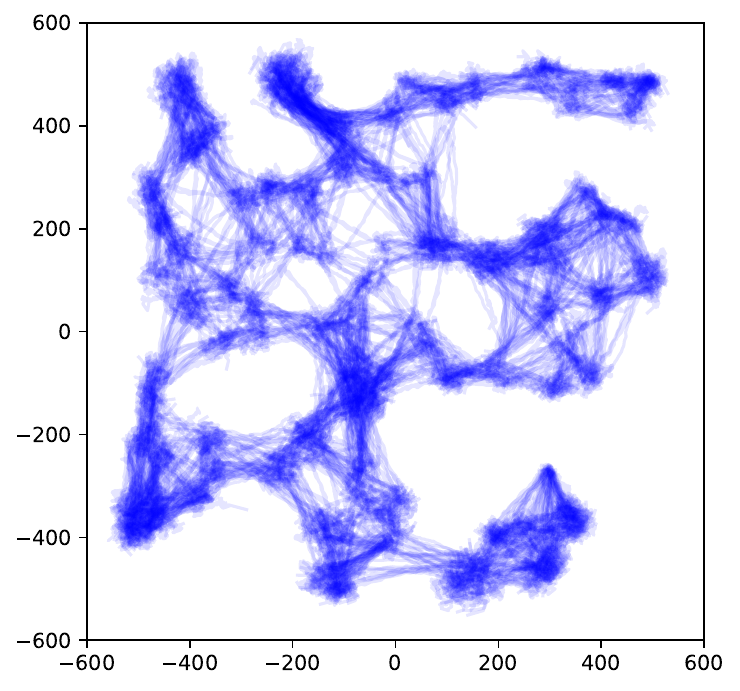}
            \caption{VAE}
        \end{subfigure}
        \begin{subfigure}[b]{0.192\linewidth}
            \centering
            \includegraphics[width=\linewidth, height=1.003\linewidth]{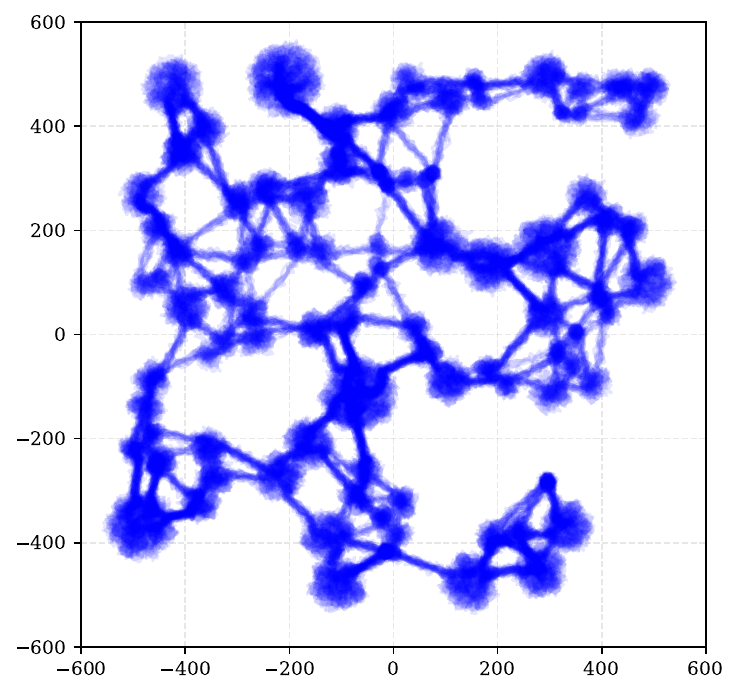}
            \caption{Gaussian}
        \end{subfigure}
    \end{tabular}
    \caption{Traces generated by the model and baseline covered by the paper. Row 1 and row 2 represent respectively illustration over GeoLife and procedural dataset.}
    \label{fig:model_comparison}
\end{figure}

\subsubsection{Quantitative Metrics}
For comprehensive utility evaluation beyond solely visual quality assessment, we selected 8 metrics classified into 2 hierarchical categories: trajectory-level and point-level metrics. This taxonomy from~\cite{cherigui2026dualperspectivesynthetictrajectory} enables evaluation of both global distributional properties and fine-grained local characteristics of generated trajectories.

\paragraph{\textbf{Trajectory-level metrics}} assess aggregate properties across entire trajectory sets. \textbf{Density Error}$^{\dagger}$ quantifies spatial distribution divergence via L1 distance over a discretized grid, ranging from 0 (identical distributions) to 2 (completely disjoint). \textbf{Pattern Score}$^{\dagger}$ measures flow pattern preservation through the F1-score of top-nn most frequently visited grid cells, capturing origin-destination dynamics (higher is better, max 1). \textbf{Average Speed} validates temporal realism via Wasserstein distance between speed distributions (m/s), where lower values indicate better preservation of mobility dynamics. \textbf{Map Reconstruction} evaluates infrastructure adherence by measuring the average Euclidean distance from synthetic trajectories to the nearest road segment in a network constructed from original data, reported in meters. Note that Density Error and Pattern Score are adopted from DiffTraj~\cite{zhu2023difftraj} for direct comparison, while the remaining six metrics provide complementary evaluation dimensions.

\begin{table}[!htb] 
    \centering
    \setlength{\tabcolsep}{0.5pt}
    \caption{Models comparison over procedural and Geolife data and across utility metrics. For each dataset, \textbf{bold} indicates best performance, \underline{underline} indicates second best. $\dagger$ Indicates metrics from DiffTraj \cite{zhu2023difftraj}. $\uparrow$/$\downarrow$ Higher/lower is better.}
    \label{tab:trajectory_metrics}
    \begin{tabular}{llcccc}
        \toprule
        \multirow{2}{*}{\textit{Trajectory-level}} & \multirow{2}{*}{Data} & \textbf{Dens Error$^{\dagger}$} & \textbf{Pattn Score$^{\dagger}$} & \textbf{Avg Speed} & \textbf{Map Recon} \\
        & & ($e^{-2}$) $\downarrow$ & ($e^{-1}$) $\uparrow$& ($e^{-4}$) $\downarrow$ & ($e^{0}$) $\downarrow$\\
        \midrule
        \multicolumn{6}{l}{\textit{Proposed Methods}} \\
        \quad Our (GS=1.0) & Proc.& $\underline{25 \pm 0.34}$ & $\underline{8.5 \pm 0.50}$ & \textbf{3.0 $\pm$ 0.0} & \textbf{2.1$\pm$0.23} \\
         & Geo.& 39 $\pm$ 1.0 & \textbf{8.4 $\pm$ 0.22} & \textbf{(5.2 $\pm$ 0.36)$e^2$} & 88 $\pm$ 1.9 \\
        \quad Our (GS=2.5) & Proc. & $27 \pm 1.10$ & \textbf{8.6$\pm$0.42} & $\underline{4.0 \pm 0.0}$ & $\underline{2.5 \pm 0.15}$ \\
         & Geo. & $\underline{38 \pm 1.1}$ & $\underline{7.8 \pm 0.56}$ & $\underline{(130 \pm 3.1)e^2}$ & 92 $\pm$ 1.0 \\
        \multicolumn{6}{l}{\textit{Baseline Methods}} \\
        \quad DiffTraj (GS=3.0) & Proc. & $53 \pm 0.15$ & $4.3 \pm 0.43$ & $5.0 \pm 0.0$ & $2.9 \pm 0.30$ \\
         & Geo. & 1.1$e^2$ $\pm$ 0.74 & 3.6 $\pm$ 0.42 & (150 $\pm$ 7.5)$e^2$ &  140 $\pm 7.5$\\
        \quad VAE & Proc. & $47 \pm 0.72$ & $5.9 \pm 0.65$ & $38 \pm 0.0$ & $2.9 \pm 0.11$ \\
        & Geo. & 49 $\pm 0.51$ & 4.5 $\pm$ 0.00 & (660 $\pm$ 0.21)$e^2$ & $\underline{86 \pm 0.31}$\\
        \quad Gaussian & Proc. & \textbf{23 $\pm$ 0.80} & $8.0 \pm 0.35$ & $260 \pm 0.0$ & $2.5 \pm 0.05$ \\
        & Geo.& \textbf{21 $\pm$ 0.22} & 6.1 $\pm$ 0.22 & (2900 $\pm$ 1.4)$e^2$ &  \textbf{76 $\pm$ 0.21}\\
        \midrule
        \multirow{2}{*}{\textit{Point-level}} & \multirow{2}{*}{Data} & \textbf{G-rank} & \textbf{Trans. Prob.} & \textbf{Loc. InPlaus.} & \textbf{Traffic Flow} \\
        && ($e^{-1}$) $\uparrow$& ($e^{-3}$) $\downarrow$& ($e^{-2}$) $\downarrow$& ($e^{-4}$) $\downarrow$\\
        \midrule
        \multicolumn{6}{l}{\textit{Proposed Methods}} \\
        \quad Our (GS=1.0) & Proc. & $\underline{7.9 \pm 0.03}$ & $1.1 \pm 0.30$ & \textbf{1.6 $\pm$ 1.2} & $9.0 \pm 0.00$ \\
        & Geo. & 6.0 $\pm$ 0.05 & 7.0 $\pm$ 1.1 & 39 $\pm$ 0.77 &  $\underline{12 \pm 0.00}$\\
        \quad Our (GS=2.5) & Proc. & $7.7 \pm 0.04$ & \textbf{0.80 $\pm$ 0.20} & $\underline{3.1 \pm 0.78}$ & $9.0 \pm 0.00$ \\
        & Geo. & $\underline{6.3 \pm 0.03}$ & $\underline{4.2 \pm 0.40}$ & 41 $\pm$ 0.69 &  \textbf{10 $\pm$ 0.00}\\
        \multicolumn{6}{l}{\textit{Baseline Methods}} \\
        \quad DiffTraj (GS=3.0) & Proc. & $6.7 \pm 0.04$ & $\underline{1.1 \pm 0.20}$ & $5.0 \pm 1.6$ & $9.0 \pm 0.00$ \\
         & Geo. & 2.6 $\pm$ 0.02 & 44 $\pm$ 3.9 & 45 $\pm$ 0.39 & 65 $\pm$ 0.00 \\
        \quad VAE & Proc. & $6.8 \pm 0.04$ & $1.8 \pm 0.00$ & $5.3 \pm 0.56$ & $12 \pm 0.00$ \\
         & Geo. & 5.8 $\pm$ 0.01 & \textbf{3.1 $\pm$ 0.20} & 39 $\pm$ 0.33 &  13 $\pm$ 0.00\\
        \quad Gaussian & Proc. & \textbf{8.0 $\pm$ 0.0} & $6.5 \pm 0.00$ & $3.3 \pm 0.27$ & $18 \pm 0.00$ \\
        & Geo. & \textbf{8.0 $\pm$ 0.04} & 5.0 $\pm$ 0.00 & \textbf{39 $\pm$ 0.12} &  15 $\pm$ 0.00\\
        \bottomrule
    \end{tabular}
\end{table}

\paragraph{\textbf{Point-level metrics}} evaluate local trajectory quality and individual point characteristics. \textbf{G-rank} assesses location popularity preservation via Kendall's $\tau_b$ correlation between frequency-based rankings of discretized locations, ranging from -1 (inverse correlation) to 1 (perfect agreement). \textbf{Transition Probabilities} evaluates first-order Markov dynamics by comparing spatial transition kernels using Wasserstein distance with spatial ground costs, where 0 indicates perfect preservation of movement patterns. \textbf{Location Implausibility} quantifies geographic plausibility as the fraction of synthetic points exceeding a distance threshold from the infrastructure network, with values in [0, 1] indicating the proportion of implausible locations. Finally, \textbf{Traffic Flow Prediction} assesses population-level mobility forecasting by applying synthetic transition matrices to predict spatial distributions, measured via time-averaged Wasserstein distance, where lower values indicate better prediction accuracy.

\paragraph{Experimental protocol.} For statistical robustness, we compute all metrics over 4 independent inference runs, each generating 10,000 trajectories with conditions randomly sampled from the training set. Results are reported as mean ± standard deviation. The infrastructure network for Map Reconstruction and Location Implausibility is constructed using grid-based discretization with 8-connectivity, filtering spatially non-adjacent transitions to eliminate artifacts.

\paragraph{Results on procedural data.} Table~\ref{tab:trajectory_metrics} presents results on synthetic procedural trajectories. Our method ( \ours) demonstrates superior performance across most metrics compared to baselines. At GS=1.0, \ours achieves the best Pattern Score and competitive Density Error, indicating strong preservation of both flow patterns and spatial distributions. Notably, point-level metrics reveal exceptional performance: G-rank correlation of $0.788$ demonstrates accurate location popularity modeling, while Transition Probabilities and Traffic Flow errors are an order of magnitude equal or lower to baselines, indicating precise capture of mobility dynamics. The low Location Implausibility ($0.016$) confirms geographic realism. The Gaussian baseline ($\mu = 0, \sigma=0.01$), despite competitive Density Error due to low level of noise, fails catastrophically on Average Speed (0.0255) and Transition Probabilities (0.0065), revealing its inability to model temporal and relational dynamics. Furthermore, one can note that, even with a small level of noise, the gaussian method destroy the realism of the trajectories in contrast to generative models. Globally, increasing CFG to 2.5 slightly degrades some metrics (e.g., Density Error increases to $0.27$) while improving others (Pattern Score: $0.8625$), suggesting a trade-off between diversity and fidelity.

\begin{figure}[!htbp]
    \centering
    \includegraphics[width=\linewidth]{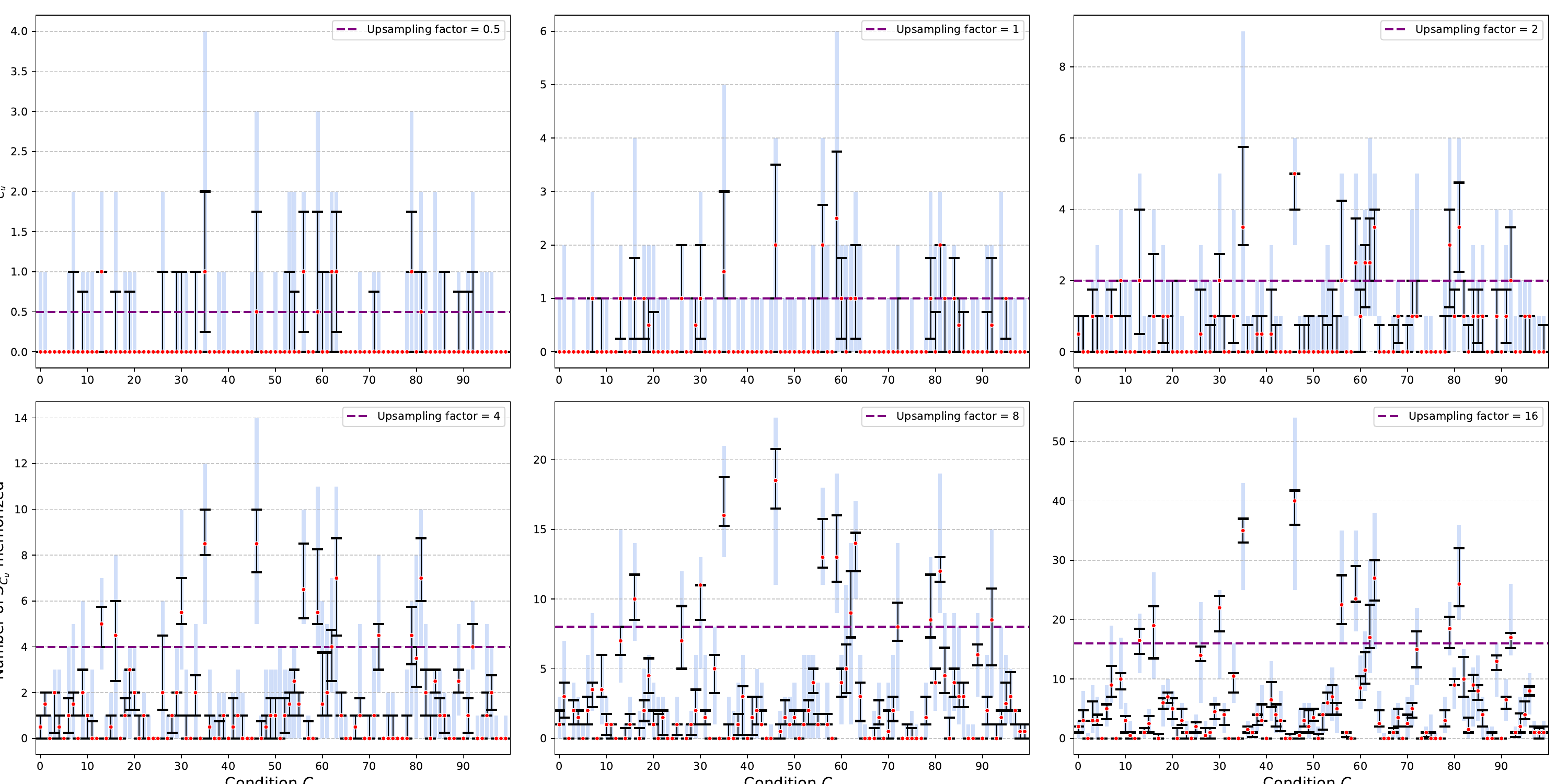}
    \caption{Number of synthetic samples detected as memorized by our framework for the procedural dataset, for various values of upsampling. Conditions leading to memorized samples above the purple threshold at least once are flagged as potentially at risks and as certain privacy risk if true for all.}
    \label{fig:memorization_results}
\end{figure}

\paragraph{Results on real-world data.} Table~\ref{tab:trajectory_metrics} shows evaluation on GeoLife GPS trajectories. \ours maintains strong performance despite the increased complexity of real-world mobility patterns. At GS=2.5, our method achieves the best Density Error and Pattern Score among the generative models, outperforming both DiffTraj and VAE. Point-level metrics further demonstrate superiority: G-rank ($0.6323$) and Transition Probabilities ($0.0042$) significantly outperform VAE, indicating better preservation of location importance and movement patterns. Interestingly, the Gaussian baseline exhibits artificially low Density Error (0.2148) but fails catastrophically on Average Speed, confirming the fact that the model doesn't take into account physics, realism. Map Reconstruction distances (87-92m) are higher than procedural data due to GeoLife's sparser sampling and GPS noise, yet \ours remains competitive with VAE (86.4m). The consistent Location Implausibility ($\sim40$) across all methods suggests this is inherent to GeoLife's data characteristics rather than model limitations.

\paragraph{Impact of classifier-free guidance (GS).} Comparing GS=1.0 and GS=2.5 reveals a consistent pattern: higher guidance improves condition alignment (better Pattern Score, G-rank) at the cost of diversity (higher Density Error, Average Speed deviation). This trade-off is more pronounced on GeoLife, where GS=2.5 increases Average Speed error by $145\%$ while improving G-rank by $6\%$, suggesting real-world data benefits from stronger conditioning to overcome inherent noise and sparsity. We noticed the same or even stronger behavior with DiffTraj. Indeed with GS <= 1, the predictions are completely noisy. Augmenting the GS to 3-5 allows to get more meaningful trajectories.

\subsection{Privacy evaluation}
In this section, we will present the choices we made to apply the test of memorization and the noise addition presented in section~\ref{subsection:privacy_framework}. We will then present the results on a random sample of 100 different conditions applied to our model, before and after application of the non-memorization guarantees. 

\subsubsection{Parameters of the privacy experiment}
To better understand the link between number of samples generated and memorization, we choose to vary the sampling factor $B$. This sampling factor is linked to the number of real trajectories associated to a given condition. Indeed, the number of generated trajectories for each condition $C_u$ is equal to $B*|K_{\mid_{C_u}}|$. We choose to vary $B$ in our experiment between $0.5$ and $16$. Because most of the value we choose are greater than 1, $B$ is also referred to as the \textbf{\textit{up}}sampling factor in the text. 

In our experimental evaluation of the privacy, we randomly selected a 100 conditions for both datasets. We limited the number of conditions to better vary the number of generated samples, with an upsampling factor going up to $16$ as previously mentioned. Throughout the experiment, we empirically choose a threshold $k=\frac{1}{2}$. In practice, we have seen that this parameter allows a human referee to make a clear distinction between two trajectories, as well as providing an upper bound on the privacy risk on condition may possess over the literature (\cite{bonnaire2025diffusion} chose a parameter of $k=\frac{1}{3}$ and remarked that a greater range of parameters (e.g. $k=\frac{1}{2}$) would not impact the results).
Starting from a theoretically grounded bound $\delta = L*\frac{d}{2}$, coming from the application of the theory to a VAE (where $L$ is the Lipschitz constant of the VAE's decoder function, and $d$ the value of the diagonal from our condition), we empirically decreased the amount of noise needed to be applied to the synthetic traces generated by a diffusion model to mitigate memorization. To validate our choice, we progressively decreased the amount of noise, and validated our results by evaluating again the memorization in the given conditions. Appendix~\ref{app:privacy_bounds} details more theory from which we derived our noise addition for the diffusion model. 

\subsubsection{Results}
Figure~\ref{fig:memorization_results} report our results over the 100 different randomly sampled condition. We conducted the upsampling 10 times for each histogram, for each value of upsampling. Boxplots are covering the range of 25\%-75\% of the number of memorized samples overs the 10 different experiments, and highlight the mean value of the number of memorized samples. For a model trained on the procedural dataset, results show that 18\% of the conditions have been flagged as at least potentially at risk. This very significant number highlight the need to a specific treatment applied to the synthetic traces generated based on those conditions. Furthermore, among the 18 conditions flagged, 8 of them are flagged as certain privacy risks, e.g. the count of synthetic samples memorized is permanently above the threshold of the upsampling factor. This is highly significant, as the procedural dataset contains almost a thousand real records. It also show that studying the privacy risks by condition is needed to better understand where in the distribution the generalization fails and lead to a memorization regime\cite{bonnaire2025diffusion,favero2025bigger}. After noise addition, done specifically on those flagged conditions, we conducted again the study of memorization. From this study, we observed that not a single condition have been flagged as potentially at risk, leading to an absence of memorization. Our procedure effectively limited the privacy risks presented by our diffusion model on the procedural dataset. We conducted a similar analysis, on the geolife dataset. Results are even more remarkable, as 51\% of the selected conditions are flagged as potentially at risk. This was however expected, as the Geolife dataset is much more sparse (it possess only 4096 possible conditions compared to the 65536 possible conditions for the procedural dataset). Similarly to the procedural dataset, the number of memorized condition decreased to 0 after conducting our procedure. 
Finally, we vary the amount of noise added to the trajectories. We performed similar experiments has the above, but varying the parameter $\delta$. We used $\delta_i = L*\frac{d}{2^i}$, and studied the amount of memorization for each $\delta_i$. We found that for $i \in [\!| 1, 7|\!]$, our method effectively reduces the risk of memorization. High $i$ however, leading to traces very similar to the barycenter, could contain traces of memorization. We then computed the utility of the generated traces, before and after the noise addition step. Interestingly, we found that for almost every metric of utility, a low amount of noise affects the utility as much as a very low noise. Indeed, in the first case the traces are very chaotic for the selected condition, and in the later they are all reduced to the barycenter of the real traces, reducing the amount of utility carried by the trajectories. However, we found that for intermediary value, such as $i=\{3,4\}$, the amount of noise is low enough for the traces to not be too chaotic, while not to high to still be diverse. The G-rank for example, increased by 17.52\% between $i=1$ and $i=4$. Similarly, the density error decreased by 39.3\% between $i=1$ and $i=3$, or increased by 113.2 \% between $i=3$ and $i=7$. 
\label{sec:experiments}

\section{Discussion}
Overall, both qualitative and quantitative results demonstrate that our model, combining VAE latent space segmentation for conditioning a diffusion model, consistently outperforms state-of-the-art methods. Beyond utility, our privacy evaluation reveals a more nuanced picture.

\paragraph{Memorization and privacy.} Diffusion models tend to naturally memorize training samples under standard training procedures, consistently with prior work~\cite{bonnaire2025diffusion,favero2025bigger} identifying training dataset size, duration, and optimizer as key factors. We expect model architecture and regularization to play a role as well. Importantly, our per-condition analysis shows that memorization is far from uniform: on the procedural dataset, 18\% of conditions were flagged as at least potentially at risk, and 8 of them as certain privacy risks — their memorized sample count permanently exceeding the upsampling threshold. This highlights that studying privacy at the condition level is essential to understand where generalization fails~\cite{bonnaire2025diffusion,favero2025bigger}. 
Targeted noise addition on flagged conditions successfully reduced this risk, validating our condition-aware mitigation strategy.

\paragraph{Limitations and future work.} Our privacy evaluation focuses on memorization, but other risks — membership inference, attribute inference attacks — remain unexplored and are left for future work. We observed how the amount of noise (and how the noise it applied) matters within our experiment. Concurrent work~\cite{mishra2026k} introduces a noise mechanism providing $k$-anonymity guarantees, whose integration on flagged conditions is a promising direction. We left for future work to explore this specific noise addition, to find the optimal noise answering the utility-privacy trade-off. Finally, our model is scoped to spatiotemporal trajectories; extending it to richer representations including categorical attributes, along with adapted privacy evaluation, is another natural next step.
\label{sec:discussion}

\section{Conclusion}
In this paper, we introduced a novel trajectory synthesis framework that leverages VAE latent space segmentation to condition a diffusion model, providing targeted guarantees against memorization. Our approach gives practitioners a principled way to escape the opacity of the generalization-versus-memorization regime, which is otherwise difficult to diagnose under large training sets. We presented an extensive set of experiments evaluating our model on both utility and privacy, benchmarked against state-of-the-art generative models, and demonstrated consistent improvements across the board.
\label{sec:conclusion}

\begin{credits}

\subsubsection{\discintname}
The authors have no competing interests to declare that are relevant to the content of this article. 
\end{credits}

\bibliographystyle{splncs04}
\bibliography{bibliography}

\appendix

\section{Bound over the noise added to the trajectories}
\label{app:privacy_bounds}

The intuition behind our choice of bound is to encompass the whole range of possible synthetic trajectories spawned by a given fixed condition $C_u$. The bound needs to be tight enough to preserve the overall utility of the generated traces, while ensuring the memorization goes down to 0. 

To fix an upper bound for $\delta$, we need the following theorem:
\begin{theorem}
    \label{thm:k_anonymity}
    Let $\mathcal{Z}$ the latent space of a VAE, and $f:\mathcal{Z}\rightarrow \mathbb{R}^n$ the decoder function (made of RelUs) of this VAE. Then, if we define $\mathcal{R}$ a hyper rectangle of diagonal $d_{\mathcal{R}}$ in $\mathcal{Z}$, the following holds: $\forall z\in\mathcal{R},~f(z) \in \mathcal{B}(f(z_c), \delta_{\mid_{\mathcal{R}}})$ with $\delta_{\mid_{\mathcal{R}}} = L_{\mathcal{R}}*\frac{d_{\mathcal{R}}}{2}$ and $L_{\mathcal{R}} = \text{sup}_{(x,y) \in \mathcal{R}} \frac{||f(x) - f(y)||}{||x-y||}$.
\end{theorem}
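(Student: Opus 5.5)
The plan is to reduce the statement to a purely geometric fact about the hyper-rectangle and then push it through the decoder using the constant $L_{\mathcal{R}}$. Throughout, I take $z_c$ to be the center of $\mathcal{R}$, that is, the midpoint of its main diagonal. I also take the norm on $\mathcal{Z}$ to be the Euclidean norm, since that is the norm in which the diagonal length $d_{\mathcal{R}}$ is measured.

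First I would establish the geometric fact. Write
\[
\mathcal{R}=\prod_{i=1}^{m}[a_i,b_i],
\]
so that $z_c=\bigl(\tfrac{a_i+b_i}{2}\bigr)_i$ and $d_{\mathcal{R}}=\bigl(\sum_i (b_i-a_i)^2\bigr)^{1/2}$. For any $z\in\mathcal{R}$, each coordinate satisfies $|z_i-(z_c)_i|\le \tfrac{b_i-a_i}{2}$. Summing the squares over coordinates gives $\|z-z_c\|\le d_{\mathcal{R}}/2$. So every point of the rectangle lies in the closed Euclidean ball of radius $d_{\mathcal{R}}/2$ around its center.

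Second, I would apply the Lipschitz quotient. Since $z_c\in\mathcal{R}$ (the rectangle is convex), the definition of $L_{\mathcal{R}}$ as a supremum over pairs in $\mathcal{R}$ gives, for $z\neq z_c$,
\[
\|f(z)-f(z_c)\|\le L_{\mathcal{R}}\,\|z-z_c\|.
\]
When $z=z_c$ the inequality holds trivially. Combining this with the first step yields
\[
\|f(z)-f(z_c)\|\le L_{\mathcal{R}}\,\frac{d_{\mathcal{R}}}{2}=\delta_{\mid_{\mathcal{R}}},
\]
which is exactly membership in the closed ball $\mathcal{B}(f(z_c),\delta_{\mid_{\mathcal{R}}})$.

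The main obstacle is not the inequality chain. It is making sure $L_{\mathcal{R}}$ is finite, since otherwise the statement is vacuous. Here I would use the ReLU hypothesis. A ReLU network is a composition of affine maps and coordinatewise $\max(\cdot,0)$, so $f$ is continuous and piecewise affine with finitely many pieces. Its global Lipschitz constant is therefore bounded by the maximum operator norm of the linear parts, or more crudely by the product of the layer weight norms. Consequently $L_{\mathcal{R}}<\infty$ for any bounded $\mathcal{R}$, and indeed for any $\mathcal{R}$ at all.

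Two remaining points need care. The first is whether the ball is open or closed. Because the bound can be attained, for example at a corner of $\mathcal{R}$ when $f$ is linear, I would state the ball as closed. The second is the choice of $z_c$. If $z_c$ were meant as the barycenter of the encoded training points rather than the geometric center, the radius would need to be the full diagonal $d_{\mathcal{R}}$ instead of $d_{\mathcal{R}}/2$. I would therefore fix $z_c$ as the geometric center explicitly at the start of the proof.
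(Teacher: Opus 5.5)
Your proposal is correct and follows the paper's route: bound $\|f(z)-f(z_c)\|$ by the Lipschitz constant times $\|z-z_c\|$, use that every point of the rectangle lies within half the diagonal of its center, and use the ReLU structure to guarantee that the constant is finite. Your write-up is slightly tighter than the paper's, since you read the inequality directly off the definition of $L_{\mathcal{R}}$ and prove the half-diagonal bound coordinatewise; the paper instead derives a global Lipschitz constant by composing $1$-Lipschitz ReLUs with linear layers and simply asserts the half-diagonal fact by definition.
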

\begin{proof}
    Consider a VAE, made of linear layers and RelUs activation. The linear layers are directly $l$-Lipschitz, and the RelUs activation are $1$-Lipschitz because $RelU(x) = max(0,x)$ therefore, by dichotomy: 
    \begin{align*}
            x\leq 0~\text{and}~y \leq 0~\text{then}&|RelU(x) - RelU(y)| = |0-0| &\leq |x-y| \\
            x\geq 0~\text{and}~y \geq 0~\text{then}&|RelU(x) - RelU(y)| = |x-y| &\leq |x-y| \\
            x > 0~\text{and}~y < 0~\text{then}~&|RelU(x) - RelU(y)| = |x-0| = x &\leq |x-y|\\ 
            &\text{Because if}~y~\text{is negative,}~x-y > x. 
    \end{align*}
    And the same holds by element-wise RelU activation function. Therefore, since every layer of the decoder is a Lipschitz function, by composition of Lipschitz functions the decoder of the VAE is L-Lipschitz. Let $f$ the function of the decoder. We want to prove that with $\mathcal{R}$ a hyper rectangle of diagonal $d_{\mathcal{R}}$ in $\mathcal{Z}$ the latent space of the VAE, and $z_c$ the center of $\mathcal{R}$, $\forall z\in\mathcal{R},~|f(z) - f(z_c)| \leq L*\frac{d_{\mathcal{R}}}{2}$.
    Let $z \in \mathcal{R}$, by L-Lipschitz property: $|f(z) - f(z_c)| \leq L*|z-z_c|$. The maximum value of $|z-z_c|$ in $\mathcal{R}$ is obtained when $z$ is on the edge of $\mathcal{R}$. The maximal distance between the center of a hyper-rectangle and point on the edge is by definition $\frac{d_{\mathcal{R}}}{2}$. Thus the theorem holds. 
\end{proof}
The exact bound for $\delta$, if applied to a VAE, then becomes the bound obtain by Theorem~\ref{thm:k_anonymity}: $\delta = L*\frac{d}{2}$. 
\begin{algorithm}[htbp!]
  \caption{Power Iteration Method}
  \label{alg:power_iteration}
  \begin{algorithmic}
    \STATE {\bfseries Inputs:} $g$, $N$
    \STATE Sample a random vector $V_0$ such that $||V_0||_2 = 1$.
    \REPEAT
    \STATE Compute $V_{k+1} = (J^TJ)V_k$
    \STATE Normalize $V_{k+1}$
    \UNTIL{$N$ iterations}
    \STATE {\bfseries Outputs:} $\sigma_{max} = ||JV_N||_2$
  \end{algorithmic}
\end{algorithm}
Now, in practice we need to adapt the theory to our use-case of a conditional diffusion model, where the condition are hyperrectangle from the latent space of a VAE. We empirically choose a CFG high enough (2.5) to ensure that if a condition is made using $\mathcal{R}$, the generated traces can be mapped back into $\mathcal{R}$ by the VAE. 
Then, we needed to compute the constant $L$ for our conditional diffusion model. Computing $L$ directly is impossible given the dimension of the problem. Instead, we computed $L$ locally, by doing the approximation that the diffusion process $g$ is differentiable for every gaussian sample $x$ on a given condition $C_u$. It means we can write:
\begin{equation}
\label{eq:diff}
    \begin{cases}
        &g_{C_u}(x+h) = g_{C_u}(x) + J(x)h + o(h) \\
        e.g.& g_{C_u}(x+h) - g_{C_u}(x) \sim J(x)h
    \end{cases}
\end{equation}
with $J$ the Jacobian of $g$ in $x$. Since $g$ is L-Lipschitz, we can also write $||g(x) - g(y)||_2 \leq L||x-y||_2$, then with $h = y-x$, we can use it and equation~\ref{eq:diff} to write: $\frac{||J(x)h||_2}{||h||_2} \leq L$.
Which lead to $L = sup_{h \neq 0} \frac{||J(x)h||_2}{||h||_2} = ||J(x)||_2 = \sigma_{max}(J(x))$ with $\sigma_{max}$ the greatest singular value of $J(x)$. We still have problem with the great number of dimension, indeed the computation of the Jacobian is, in practice, not possible (It may contain billions of element for a Unet). However we only need the value of its greatest singular value. To achieve our goal, we give an approximation to its value by the power iteration method. Algorithm~\ref{alg:power_iteration} contains a description of the method to find the greatest singular value of the Jacobian without having to compute the full Jacobian matrix. After application of Algorithm~\ref{alg:power_iteration}, we provide a value, for a given $x$, of $L_{C_u}(x)$. We finally apply the Monte-Carlo method to find the global value $L_{C_u}$. In our paper, we do so by applying the above methodology for 300 different values of $x$ and $t$, for a given condition $C_u$.

\end{document}